\pdfoutput=1
\documentclass[11pt]{article}
\usepackage[final]{acl}

\usepackage{times}
\usepackage{latexsym}
\usepackage[T1]{fontenc}
\usepackage[utf8]{inputenc}
\usepackage{microtype}
\usepackage{inconsolata}
\usepackage{graphicx}
\usepackage{amsmath}
\usepackage{amssymb}
\usepackage{amsthm}
\usepackage{booktabs}
\usepackage{enumitem}
\usepackage{xspace}
\usepackage{multirow}
\usepackage{placeins}
\usepackage{algorithm}
\usepackage{algpseudocode}
\usepackage{fontawesome5}

\newcommand{\methodfull}{Sampling-First Structured Graph Encoding\xspace}
\newcommand{\method}{\texorpdfstring{\mbox{S\textsuperscript{2}\kern-.08em GE}}{S2GE}\xspace}

\newcommand{\digitprobe}{\texttt{digit8}\xspace}
\newcommand{\macrostate}{macro-state\xspace}

\newcommand{\appendixref}[1]{\hyperref[#1]{Appendix~\ref*{#1}}}

\theoremstyle{plain}

\newtheorem{proposition}{Proposition}

\theoremstyle{definition}

\theoremstyle{remark}

\title{%
\includegraphics[width=0.48\textwidth]{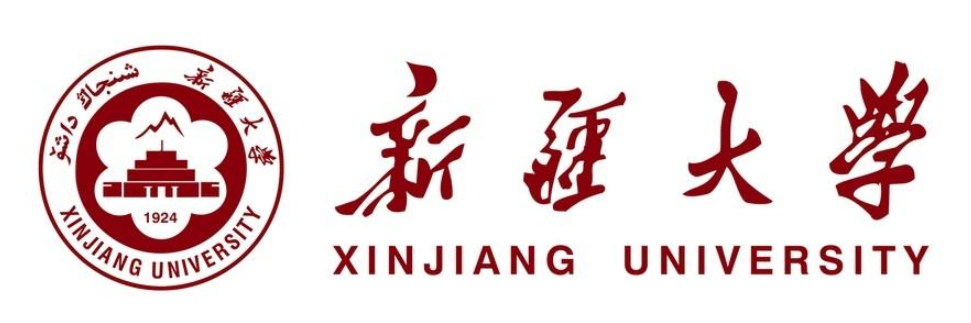}\\[0.8em]
Graph Evidence Is Not Enough: Diagnosing Native Decoder Use in Graph-Augmented LLMs%
}

\author{
\textbf{Xiaoyu Guo}\textsuperscript{1},
\textbf{Pengcheng Chen}\textsuperscript{2},
\textbf{Jiong Yu}\textsuperscript{2},
\textbf{Yi Lu}\textsuperscript{2},
\textbf{Yaohua Wang}\textsuperscript{1},
\textbf{Ziyang Li}\textsuperscript{1}\thanks{Corresponding author.}
\\
\textsuperscript{1}School of Software, Xinjiang University,
Urumqi 830002, China \\
\textsuperscript{2}School of Computer Science and Technology, Xinjiang University,
Urumqi 830046, China \\
\faEnvelope\,\href{mailto:107552501648@stu.xju.edu.cn}{\texttt{107552501648@stu.xju.edu.cn}}\quad
\textbf{Correspondence:} \href{mailto:liziyang@xju.edu.cn}{\texttt{liziyang@xju.edu.cn}} \\
\textbf{\faGithub\,Code:} \href{https://github.com/dogeee-debug/S2GE}{\texttt{github.com/dogeee-debug/S2GE}}
}

\begin{document}
\maketitle
\begin{abstract}
Graph-augmented large language models often assume that graph evidence produced
by external computation and placed in the input can be used by the native decoder. We test this assumption with
HopQA, a deliberately bounded diagnostic that asks for the shortest-hop
distance between two query nodes. Because the answer is a small integer and the
target is purely topological, failure cannot be dismissed as open-ended
generation or ambiguous evaluation. Yet existing graph-augmented baselines
still fail on this setting, showing that providing graph evidence is not the
same as making it usable. We introduce an intervention triangle with three
matched conditions: readable graph evidence, shuffled graph evidence, and
no-graph input. This separates evidence inclusion, structural readability, and
decoder-usable topology. Guided by this diagnosis, we present S$^2$GE as an
instance showing that diagnosis-driven interface design can improve native
decoder usability. S$^2$GE uses query-aware sampling, endpoint and
proximity-based ordering, and structure-preserving alignment. Across DBLP,
Biomedical, GoodReads, and PubMed, S$^2$GE achieves
strict exact-match scores of $36.5\%$, $57.8\%$, $76.6\%$, and $52.0\%$,
improving over the strongest native-generation baseline by $53.5$ points on average. The
interventions further reveal harmful-shuffle, shuffle-robust, and
no-graph-saturated regimes.

\end{abstract}

\section{Introduction}
\label{sec:intro}
\begin{figure}[!b]
    \centering
    \includegraphics[width=0.92\columnwidth]{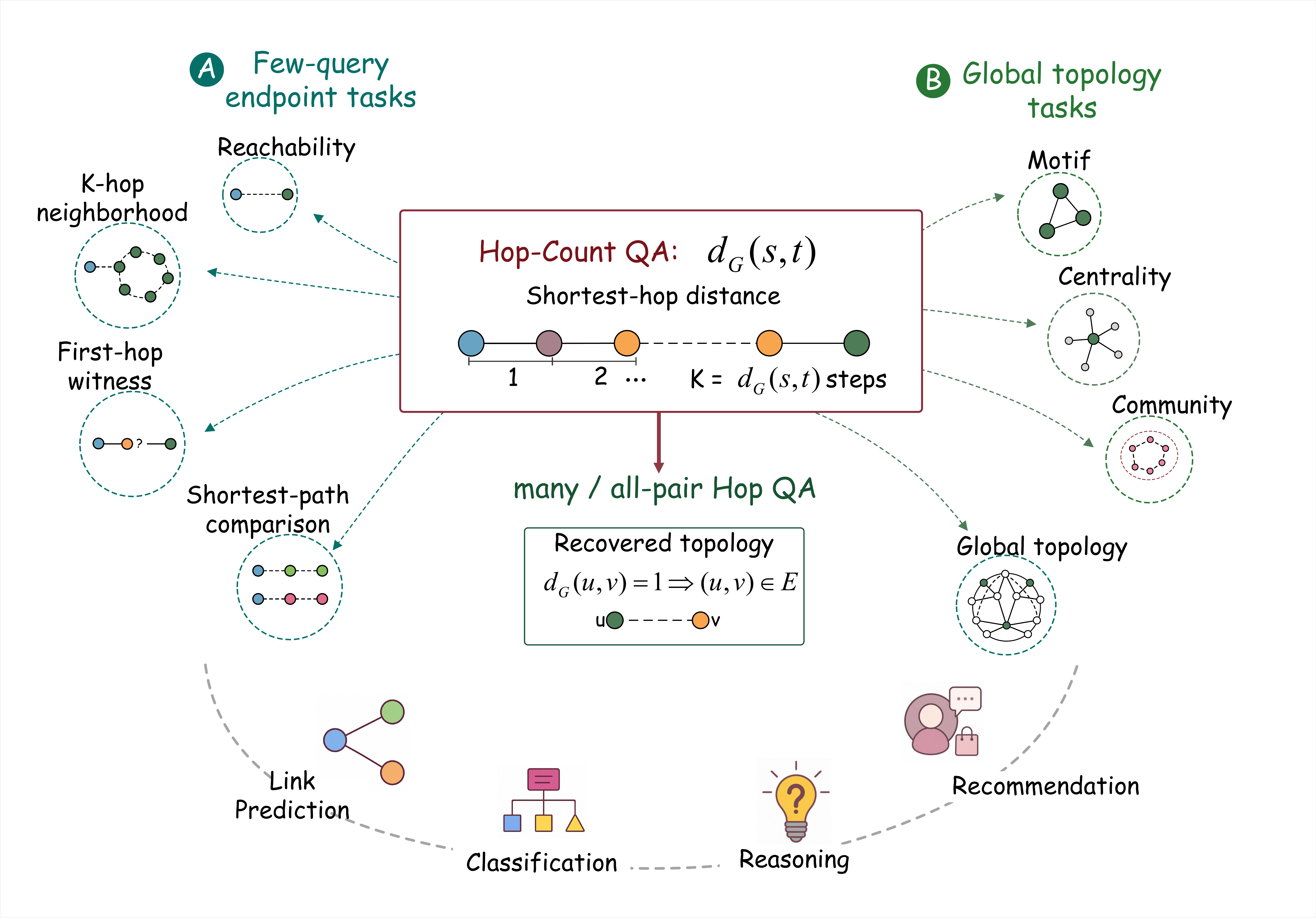}
    \caption{HopQA as a bounded topology diagnostic for testing graph-evidence use on the native generation surface.}
    \label{fig:hopqa-topology-diagnostic}
\end{figure}

External execution is the right abstraction for deterministic graph
computation: shortest paths, reachability, and neighborhood queries are better
handled by graph algorithms than by free-form generation. Even if future
graph-language systems delegate such computation to external executors, an
interface problem remains: task-relevant graph structure must be translated
into representations that the model can read and use. Sampling, serialization,
projection, and role ambiguity can distort this translation before the decoder
acts. We use HopQA as a bounded probe of this translation fidelity, measuring
whether externally produced graph evidence survives the graph-to-decoder
interface well enough to support an exact native answer.

Graph-augmented LLMs expose graph structure to language models before
generation. Existing surveys describe common interface forms, including graph
as text, graph tokens, graph-enhanced generation, and LLM-only prompting
\citep{ren2024survey,li2024graphmeets}. Representative systems
follow this dependency through different interfaces: G-Retriever retrieves graph
evidence, and LLaGA maps graph structure into tokens for the decoder
\citep{he2024gretriever,chen2024llaga}. In each case, the decoder must turn
exposed evidence into an answer.

To validate whether such evidence becomes usable, Hop-count question answering
(HopQA) tests the graph-to-decoder path in a compact form. Given a source node
and a target node, the model must generate their shortest hop distance from
exposed graph evidence. We evaluate this with strict exact match, an existing
standard under which current baselines fail. HopQA therefore asks whether
exposed graph evidence can become an exact generated answer through ordinary
decoding. Figure~\ref{fig:hopqa-topology-diagnostic} summarizes this
evidence-to-answer workflow.

Under this test, existing graph-augmented LLMs fail sharply on the
\textit{Core-HopQA} family: DBLP and PubMed cover citation-style graphs,
Biomedical covers a dense biomedical relation graph, and GoodReads covers a
recommendation-style graph. G-Retriever obtains $0.0{\pm}0.0\%$ strict EM across
these four domains, and LLaGA remains near zero. Graph evidence reaches the
decoder, while exact hop answers remain absent from the native output. On the
same examples, graph-only classifiers and SubgraphRAG retrieval-execution
checks extract hop signal from graph structure and retrieved evidence graphs.
The second evaluation family, \textit{Auxiliary Graph Diagnostics}, then probes
path witnesses, graph-token interventions, and adjacency transfer.

This gap points to three interface requirements: relevant structure must enter
the bounded evidence budget, endpoint roles must remain readable, and projection
should preserve local adjacency. We propose \methodfull{} (\method) with
query-aware sampling, role-based perception, and adjacency-based
alignment. Sampling comes first to expose endpoint-conditioned evidence; readable,
absent, and shuffled graph conditions then test how evidence organization
changes the decoder's answer state.

The paper makes three contributions.
\begin{itemize}[leftmargin=1.2em]
    \item \textbf{A diagnostic gap between graph evidence and native decoder use.}
          We identify a gap between graph evidence reaching the decoder and exact hop answers appearing in native generation. On HopQA, trained graph-augmented LLM baselines fail under strict exact match even though the answer space is bounded and graph signal can be extracted.

    \item \textbf{A diagnostic design for separating graph signal from decoder use.}
          We design a diagnostic protocol that separates signal existence, evidence exposure, retrieval-execution recovery, and native generation. The readable/no-graph/shuffled intervention triangle is the graph-token tool used inside this protocol.

    \item \textbf{Residual graph signal in generated outputs.}
          We expose measurable residual effects of graph-token changes on generated answers, allowing us to distinguish helpful residue, harmful residue, and output collapse across data regimes.
\end{itemize}

\section{Related Work}
\label{sec:related}
\paragraph{From graph signal to visible evidence.}
Graph-language systems make structure visible through retrieval, text
serialization, and learned graph-token interfaces. G-Retriever retrieves compact
textual subgraphs before generation \citep{he2024gretriever}; LLaGA projects
graph representations into the LLM input space \citep{chen2024llaga}. GraphRAG
and GRAG organize retrieved evidence around graph structure
\citep{edge2025graphrag,hu2025grag}, while KG$^2$RAG and SubgraphRAG emphasize
knowledge-graph guidance or adjustable subgraph retrieval
\citep{zhu2025kg2rag,li2025subgraphrag}. A recent diagnostic anchor is
\citet{zhou2026whatbreaks}, who benchmark KG-RAG under incomplete knowledge and show
that retrieval-centered systems can struggle when reasoning must go beyond
explicit triples. Recent GraphRAG surveys further organize the pipeline around
query processing, retrieval, organization, and generation
\citep{han2025graphrag,zhang2025graphragsurvey}. GraphRAG-Bench asks when graph
structure helps RAG \citep{xiang2026whengraphs}, while ROGRAG and GFM-RAG
further introduce graph-specific RAG or retrieval frameworks
\citep{wang2025rograg,luo2026gfmrag}.
These works make graph evidence more visible and controllable, while
graph-signal existence, exposed evidence, and decoder usability remain only
partly separated.

\paragraph{From visible evidence to readable interfaces.}
Visible evidence becomes useful only when its organization is readable to the
decoder. GraCoRe and GraphOmni provide benchmark-level evidence that graph
reasoning performance is sensitive to task format, prompt design, and
evaluation coverage \citep{yuan2025gracore,xu2026graphomni}. GraphArena,
GraphEval36K, and recent
graph-generation evaluations extend the same concern to algorithmic graph
computation, code-style graph tasks, and structural generation
\citep{tang2025grapharena,wu2025grapheval36k,demirci2025graphsavvy}.
For learned interfaces, \citet{hoyle-etal-2021-promoting} show that graph-to-text
models are sensitive to alternative graph linearizations. \citet{zhang2026revisiting}
further revisit graph-tokenized language models and show that compression alone
does not guarantee graph understanding. The most
direct readability anchor is \citet{chaudhary2026graff}: GRAFF uses
graph-augmented fine-grained fusion to preserve node-level and relation-level
cues. Recent graph-token and graph-foundation-model work explores discrete graph
tokenization and reconstructive graph instruction tuning
\citep{guo2026graphtokenization,zhang2026rglm,xiang2026quantized}. Graph
foundation models and KG foundation models extend the same interface problem to
graph-language pretraining and semantic KG transfer
\citep{wang2025gfmsurvey,arun2025semma,kong2025gofa}.
These systems improve graph exposure and evaluation breadth, while final scores
can still hide whether errors come from missing evidence, unreadable
organization, or the generation surface.

\paragraph{From readable evidence to native usability.}
Many graph reasoning systems strengthen task performance by adding external
execution, symbolic traversal, tool calls, candidate scoring, constrained
decoding, or supervised reasoning traces. Graph-CoT and KG-CoT add reasoning
traces over graphs or knowledge graphs \citep{jin2024graphcot,zhao2024kgcot}.
KiRAG and HopRAG strengthen retrieval-reasoning pipelines for knowledge-driven
or logic-aware QA \citep{fang2025kirag,liu2025hoprag}. MIAoG uses multi-view
adaptive reasoning for KG-enhanced LLMs \citep{zhang2026miag}, and GLOW combines
GNN candidate prediction with LLM prompting for open-world KGQA
\citep{abdallah2026glow}.
GraphPile and G1 show that graph problems can also be used as training or
reinforcement-learning data for broader graph reasoning
\citep{zhang2025graphproblems,guo2025g1}.
These approaches are effective for graph tasks, while their final answers may
come from execution or scoring surfaces different from ordinary decoding.
Bounded structural queries therefore isolate native usability: when graph signal
is visible and organized, can the decoder itself produce a legal and correct
structural answer? Output-state audits connect this question to
language-generation degeneration studies
\citep{holtzman2020curious,welleck2020neural}.

\section{Bounded Graph-Signal Diagnosis}
\label{sec:method}
\begin{figure*}[t]
\centering
\includegraphics[width=\textwidth]{figures/newfigure2.pdf}
\caption{Workflow of the \method sampling-first graph-token interface. The data flow follows three linked interface conditions: (a) query-aware sampling selects bounded endpoint-conditioned evidence, (b) role-based perception orders and annotates sampled nodes so endpoint roles and local proximity are readable, and (c) adjacency-based alignment regularizes projected tokens so sampled adjacency remains visible after projection.}
\label{fig:s2ge-workflow}
\label{fig:graph-token-interface-workflow}
\end{figure*}

\begin{table*}[t]
    \centering
    \footnotesize
    \setlength{\tabcolsep}{2.2pt}
    \renewcommand{\arraystretch}{1.04}
    \begin{tabular*}{\textwidth}{@{\hspace{2pt}\extracolsep{\fill}}lllccccc@{\hspace{2pt}}}
        \toprule
        Method & Decision surface & Domain & StrictEM $\uparrow$ & ParsedEM $\uparrow$ & $\Delta$Chance $\uparrow$ & SingleInt $\uparrow$ & Dom.Ans. $\downarrow$ \\
        \midrule
        Random & uniform legal guess & all & $20.0$ & $20.0$ & $0.0$ & $100.0$ & $20.0$ \\
        Majority & fixed majority label & all & $20.0$ & $20.0$ & $0.0$ & $100.0$ & $100.0$ \\
        \midrule
        G-Retriever & native generation & DBLP & $0.0{\pm}0.0$ & $23.2{\pm}4.4$ & $-20.0{\pm}0.0$ & $0.0{\pm}0.0$ & $38.0{\pm}4.5$ \\
        G-Retriever & native generation & Biomedical & $0.0{\pm}0.0$ & $45.8{\pm}4.4$ & $-20.0{\pm}0.0$ & $0.0{\pm}0.0$ & $31.3{\pm}6.8$ \\
        G-Retriever & native generation & GoodReads & $0.0{\pm}0.0$ & $41.3{\pm}2.6$ & $-20.0{\pm}0.0$ & $0.0{\pm}0.0$ & $34.8{\pm}4.4$ \\
        G-Retriever & native generation & PubMed & $0.0{\pm}0.0$ & $20.3{\pm}0.8$ & $-20.0{\pm}0.0$ & $0.0{\pm}0.0$ & $31.6{\pm}2.5$ \\
        \midrule
        LLaGA & native generation & DBLP & $2.3{\pm}2.1$ & $12.6{\pm}1.7$ & $-17.7{\pm}2.1$ & $10.5{\pm}8.7$ & $29.9{\pm}3.1$ \\
        LLaGA & native generation & Biomedical & $0.0{\pm}0.0$ & $20.8{\pm}2.0$ & $-20.0{\pm}0.0$ & $0.0{\pm}0.0$ & $45.6{\pm}16.9$ \\
        LLaGA & native generation & GoodReads & $6.8{\pm}5.5$ & $16.8{\pm}2.8$ & $-13.2{\pm}5.5$ & $35.5{\pm}26.6$ & $40.3{\pm}4.1$ \\
        LLaGA & native generation & PubMed & $0.0{\pm}0.0$ & $6.8{\pm}5.6$ & $-20.0{\pm}0.0$ & $0.0{\pm}0.0$ & $27.7{\pm}3.3$ \\
        \midrule
        Pure GNN & graph-only classifier & DBLP & $32.1$ & $32.1$ & $+12.1$ & $100.0$ & -- \\
        Pure GNN & graph-only classifier & Biomedical & $27.8$ & $27.8$ & $+7.8$ & $100.0$ & -- \\
        Pure GNN & graph-only classifier & GoodReads & $42.5$ & $42.5$ & $+22.5$ & $100.0$ & -- \\
        Pure GNN & graph-only classifier & PubMed & $45.3$ & $45.3$ & $+25.3$ & $100.0$ & -- \\
        \midrule
        SubgraphRAG & retrieval execution & DBLP & $24.9$ & $24.9$ & $+4.9$ & $100.0$ & -- \\
        SubgraphRAG & retrieval execution & Biomedical & $20.5$ & $20.5$ & $+0.5$ & $100.0$ & -- \\
        SubgraphRAG & retrieval execution & GoodReads & $58.2$ & $58.2$ & $+38.2$ & $100.0$ & -- \\
        SubgraphRAG & retrieval execution & PubMed & $50.5$ & $50.5$ & $+30.5$ & $100.0$ & -- \\
        \midrule
        \method & native generation & DBLP & $\mathbf{36.5{\pm}1.6}$ & $36.5{\pm}1.6$ & $+16.5{\pm}1.6$ & $100.0{\pm}0.0$ & $27.4{\pm}2.9$ \\
        \method & native generation & Biomedical & $\mathbf{57.8{\pm}0.9}$ & $57.8{\pm}0.9$ & $+37.8{\pm}0.9$ & $100.0{\pm}0.0$ & $28.4{\pm}0.3$ \\
        \method & native generation & GoodReads & $\mathbf{76.6{\pm}0.9}$ & $76.6{\pm}0.9$ & $+56.6{\pm}0.9$ & $100.0{\pm}0.0$ & $24.0{\pm}1.2$ \\
        \method & native generation & PubMed & $\mathbf{52.0{\pm}3.4}$ & $52.0{\pm}3.4$ & $+32.0{\pm}3.4$ & $100.0{\pm}0.0$ & $28.7{\pm}1.9$ \\
        \bottomrule
    \end{tabular*}
    \caption{HopQA main results. Values are percentages. G-Retriever, LLaGA, and \method use ordinary generation; Pure GNN and SubgraphRAG are numeric graph-control paths, so ParsedEM equals EM and SingleInt is $100\%$. Dom.Ans. is omitted for these controls because their sampled output distributions are unavailable. $\Delta$Chance is StrictEM minus the $20\%$ random/majority rate. Additional controls and path-found rates are in \appendixref{app:experiment-details}, Table~\ref{tab:baseline-comparison}.}
    \label{tab:native-generation}
\end{table*}

HopQA serves as a bounded diagnostic of structural-evidence usability across
the graph-to-decoder interface and covers only a controlled slice of graph
reasoning. The task is bounded in its answer space, exposed evidence budget, interface
content, and native generation surface. These restrictions separate graph-signal
existence, evidence exposure, interface readability, and decoder use. Each
example contains a graph $G=(V,E,X)$, a query $q=(s,t)$, and a gold answer
$y=d_G(s,t)\in\mathcal A$, where $\mathcal A=\{1,2,3,4,5\}$. The model receives
a bounded graph interface built from a sampled subgraph and must generate the
exact hop label through ordinary decoding. \appendixref{app:notation} provides a
notation reference table for the full paper.

For a query $q=(s,t)$, a sampler with node budget $B$ produces a sampled
subgraph
\[
G_{\mathrm{sub}}=S_B(G,q).
\]
The graph interface exposed to the language model is
\[
I_B(G,q)=I(G_{\mathrm{sub}},q)=(T(q),M(q)),
\]
where $T(q)=(z_{\pi(1)},\ldots,z_{\pi(m)})$ is the ordered sequence of projected
graph tokens, and $M(q)$ contains input-computable annotations such as endpoint
roles, local degree, and ordering keys. The decoder succeeds when it generates
the exact hop answer from this interface. We write
\[
U_\theta(q;I_B)=P_\theta(\hat y=d_G(s,t)\mid I_B(G,q)).
\]
This quantity states the diagnostic event: exposed graph evidence becomes useful
when ordinary generation produces the correct hop label.

\appendixref{app:formal-boundary} gives the formal statements behind the
diagnostic. For simple loopless unweighted graphs, all-pair hop counts determine
adjacency because an edge exists exactly when the hop distance is one.
Five-choice HopQA is a bounded slice of this distance query. In the main text,
this fact motivates HopQA as a compact graph-structure test.

\paragraph{Hypothesis 1: graph signal exists before decoder use.}
The first question is how much hop signal can be extracted from graph evidence
outside native generation. We compare ordinary generation with graph-based
controls that use graph structure directly. Let $\mathcal{C}_{\mathrm{graph}}$
be the set of graph controls, such as a GNN hop classifier and SubgraphRAG
retrieval-execution checks over retrieved evidence. For method $m$, let
$m_{\mathrm{nat}}$ denote its ordinary generation path. We define the positive
signal-use gap as
\[
\Delta_{\mathrm{su}}^{+}(m)
=
\max\Bigl(0,
\max_{c\in \mathcal{C}_{\mathrm{graph}}} \mathrm{EM}(c)
-
\mathrm{EM}(m_{\mathrm{nat}})
\Bigr).
\]
A positive $\Delta_{\mathrm{su}}^{+}(m)$ marks hop signal extracted by graph
controls beyond the ordinary generation path. This separates graph-signal
recovery from decoder use.

\paragraph{Model scale and interface boundaries.}
A larger backbone may improve optimization, instruction following, or robustness to
imperfect serialization. However, scaling acts only after the exposed interface
has been constructed. For any decoder family that observes an example solely
through $I_B(G,q)$, distinctions removed by sampling or representation cannot be
reconstructed from decoder capacity alone. Proposition~2 formalizes this
interface-conditioned upper bound, and Corollary~2.1 shows that it holds across
decoder scales. We therefore distinguish \textbf{scale-sensitive decoder
failure} from \textbf{interface-induced information loss}. Our single-backbone
experiments characterize the former only for LLaMA-3-8B-Instruct, whereas the
latter motivates the interface principles studied here independently of
backbone size.

\paragraph{Hypothesis 2: decoder use depends on readable interface conditions.}
After graph signal is established, evidence organization becomes the next
question. We describe the interface by three measurable conditions:
\[
\Phi(I_B,q)=(C_{\mathrm{local}}(q),S_{\mathrm{role}}(q),A_{\mathrm{adj}}(q)).
\]
Here $C_{\mathrm{local}}(q)$ measures whether evidence related to the query
endpoints enters the bounded sample. In experiments, it is audited by endpoint
coverage and path recall. $S_{\mathrm{role}}(q)$ measures whether the decoder
can distinguish the source, the target, nearby nodes, and context nodes. It is
tested by role and query ablations. $A_{\mathrm{adj}}(q)$ measures whether
projected graph tokens retain local adjacency in the sampled graph. It is
tested by frozen adjacency probes.

These conditions are ordered: sampling controls what can be seen, role
perception makes sampled nodes readable, and alignment preserves sampled
adjacency. Removing a limiting condition should therefore degrade native
generation.

\paragraph{Hypothesis 3: interface changes shift the generated answer distribution.}
In error-prone domains, unreadable interfaces can expose distinct answer-state
regions. For each generated answer $\hat y_i$, we record diagnostic labels such
as correct legal answer, wrong legal answer, illegal answer, long-digit
continuation, or malformed output. These audits need not form a mutually
exclusive partition. For a chosen categorical mapping
$O:\hat{\mathcal Y}\to\mathcal O$ and interface condition $I$, define
\[
p_\theta(o\mid I)
=
\frac{1}{N}\sum_{i=1}^{N}\mathbf{1}\{O(\hat y_i)=o\}.
\]
We also track dominant-answer concentration,
\[
D(I)=
\max_{a\in\{1,\ldots,5\}}
\frac{1}{N}\sum_{i=1}^{N}
\mathbf{1}\{\mathrm{extract}(\hat y_i)=a\}.
\]
Readable, no-graph, and shuffled interfaces are denoted by $I_R$, $I_N$, and
$I_S$. These diagnostics separate helpful residue, harmful residue, and stable
failure modes in generated answers.

These three hypotheses define the bounded-HopQA diagnostic: graph signal can be
extracted before decoder use, exposed evidence must remain readable, and
interface changes can shift bounded output states. The proof sketch and formal
propositions are given in \appendixref{app:formal-boundary}; Section~\ref{sec:exp}
tests the hypotheses empirically.

\subsection[\method Interface]{\method: Sampling-First Structured Graph Encoding}
\label{sec:s2ge-interface}
We propose \methodfull{} (\method). The method follows the order of the three
interface conditions: include query-related evidence, make sampled nodes
readable, and preserve local adjacency after projection. Figure~\ref{fig:s2ge-workflow}
gives the workflow.

\paragraph{Query-aware sampling.}
The sampler first places the query endpoints into the sampled evidence when
they are available. It then expands locally around endpoint-relevant and
high-degree nodes. The output is a bounded subgraph $G_{\mathrm{sub}}$. This
step targets $C_{\mathrm{local}}(q)$. It decides which part of the graph can be
seen by the decoder.

\paragraph{Role-based perception.}
Following topology-aware graph prompting and tokenization methods
\citep{fatemi2024talk,chen2024llaga,zhang2026revisiting},
\method attaches query-conditioned roles to sampled nodes before projection
reaches the decoder. Like these topology-aware interfaces, the role channel
packages endpoint identity, local proximity, and neighborhood salience as
readable token features. The ordering function uses source and target roles,
local proximity to the endpoints, degree, and a stable traversal index:
\[
\pi
=
\mathrm{sort}\bigl(
V_B;
r(v),d_s(v),d_t(v),-\deg(v),b(v)
\bigr).
\]
Here $r(v)$ gives endpoint and context roles, $d_s(v)$ and $d_t(v)$ are local
distances inside the sampled subgraph, and $b(v)$ is a stable traversal index.
This step targets $S_{\mathrm{role}}(q)$ by presenting the source, target,
nearby nodes, and context nodes as distinguishable positions in the token
sequence.

\paragraph{Adjacency-based alignment.}
Projection can weaken local edges in the sampled graph. To preserve this local
structure, \method adds an alignment loss on the projected graph tokens. Let
$Z\in\mathbb{R}^{m\times d}$ be the matrix of projected node tokens. Let $A_B$
be the sampled adjacency matrix, let $\tilde A_B=A_B+I$, and let
\[
\hat A_B=\tilde D^{-1/2}\tilde A_B\tilde D^{-1/2}
\]
be the normalized adjacency target. The alignment loss is
\[
\mathcal{L}_{\mathrm{align}}
=
\left\|
\mathrm{norm}(Z)\mathrm{norm}(Z)^\top
-
\hat A_B
\right\|_F^2,
\]
where $\mathrm{norm}(\cdot)$ applies row-wise $\ell_2$ normalization. This
step targets $A_{\mathrm{adj}}(q)$. It encourages adjacent sampled nodes to
remain close in projected token space.

\subsection{Training Objective and Diagnostics}
\label{sec:training-objective-diagnostics}
Following the interface coordinates
$\Phi(I_B,q)=(C_{\mathrm{local}}(q),S_{\mathrm{role}}(q),A_{\mathrm{adj}}(q))$,
we optimize native answer generation together with sampled-adjacency
preservation. The training objective is
\[
\mathcal{L}
=
\mathcal{L}_{\mathrm{LM}}
+\lambda \mathcal{L}_{\mathrm{align}},
\]
where $\lambda=0.25$ is fixed by validation and
\[
\mathcal{L}_{\mathrm{LM}}
=
-\sum_{j=1}^{|\mathbf y|}
\log p_\theta(y_j\mid y_{<j},I_B(G,q))
\]
is the standard token-level negative log-likelihood of the target answer
sequence. Implementation details are given in
\appendixref{app:experiment-details}.

\section{Experiments}
\label{sec:exp}
We evaluate two named families: \textit{Core-HopQA} domains for native hop-label
generation, and \textit{Auxiliary Graph Diagnostics} for signal extraction, path
evidence, intervention behavior, and adjacency transfer. Together, they test
whether exposed graph evidence becomes readable and usable under native
generation.

\subsection{Experimental Setup}

\textbf{Datasets.} Core-HopQA is constructed from public graph sources:
GRBench domains \citep{jin2024graphcot} for DBLP, Biomedical, and GoodReads,
and Planetoid/PubMed \citep{yang2016planetoid} for PubMed. The domains include
citation-style graphs (DBLP and PubMed), a biomedical relation graph
(Biomedical), and a recommendation-style graph (GoodReads).
These categories test whether topology evidence remains usable across
bibliographic, biomedical, and recommendation settings. Dataset sources, graph
statistics, split construction, and chance baselines are reported in
\appendixref{app:datasets-metrics}, Table~\ref{tab:dataset-overview}.
All domains use a balanced five-label protocol, giving random and majority
prediction a $20\%$ expected accuracy.

\textbf{Baselines.} We group baselines by decision surface. G-Retriever and
LLaGA are native-generation baselines
\citep{he2024gretriever,chen2024llaga}. Pure GNN is a graph-only control.
SubgraphRAG is the retrieval-execution control \citep{li2025subgraphrag}.
Legal-choice, raw-text, and Graph-CoT controls are reported in
\appendixref{app:experiment-details}, Table~\ref{tab:baseline-comparison}.
All trainable \method{} runs use the same maximum 12-epoch schedule with
early-stopping patience 6; G-Retriever and LLaGA are evaluated using their
released implementations and checkpoints, including method-specific
optimization settings.

\textbf{Metrics.} StrictEM is the primary native-usability metric:
\[
\mathrm{StrictEM}
=
\frac{1}{n}\sum_i \mathbf 1\{\operatorname{exact}(\hat y_i)=y_i\}.
\]
ParsedEM audits first-integer traces,
\[
\mathrm{ParsedEM}
=
\frac{1}{n}\sum_i \mathbf 1\{\operatorname{extract}(\hat y_i)=y_i\},
\]
and the positive signal-use gap is
\[
\Delta_{\mathrm{su}}^{+}(m)
=
\max\Bigl(0,\max_{c\in\mathcal C_{\mathrm{graph}}}\mathrm{EM}(c)
-\mathrm{EM}(m_{\mathrm{nat}})\Bigr).
\]
Table~\ref{tab:native-generation} also reports SingleIntRate and
DominantAnswerRate as output-state audits; full metric definitions are in
\appendixref{app:datasets-metrics}, Table~\ref{tab:metric-definitions}.

\subsection{Native Decoder Usability on Core-HopQA}

To verify native decoder usability, we evaluate these metrics on Core-HopQA.

Table~\ref{tab:native-generation} shows that G-Retriever obtains
$0.0{\pm}0.0\%$ strict EM across Core-HopQA, while LLaGA stays near zero.
\method reaches $36.5$--$76.6\%$ strict EM, corresponding to $\Delta$Chance
gains of $+16.5$, $+37.8$, $+56.6$, and $+32.0$ on DBLP, Biomedical,
GoodReads, and PubMed. Its DominantAnswerRate remains close to a balanced
answer surface, unlike majority fallback. Pure GNN and SubgraphRAG provide
graph-control EM on graph-only and retrieval-execution decision surfaces;
SubgraphRAG path-found rates are reported in \appendixref{app:experiment-details},
Table~\ref{tab:baseline-comparison}.

ParsedEM is useful mainly as an audit. Tables~\ref{tab:baseline-error-audit}
and~\ref{tab:baseline-comparison} show that first-integer extraction and
legal-choice scoring can expose residual numeric traces, while Table~\ref{tab:native-generation}
attributes the native failure to graph-evidence use and output-surface
formation together.

\begin{figure*}[t!]
    \centering
    \makebox[\textwidth][c]{\includegraphics[width=1.04\textwidth]{figures/mat3333.pdf}}
    \caption{Experimental and diagnostic summary for the three diagnostic hypotheses. Panel (a) combines three independently conducted diagnostics: a DBLP legal-choice interface intervention, PubMed path-witness generation, and six-direction zero-shot adjacency transfer. Panel (b) reports checkpoint-level output-state dynamics for DBLP, a harmful-shuffle domain, together with checkpoint PCA, where PC1 and PC2 explain $98.52\%$ and $1.48\%$ of the observed variance, respectively. Panel (c) reports the main native-generation StrictEM results across the four HopQA domains.}
    \label{fig:expr-summary}
\end{figure*}

Table~\ref{tab:path-witness} reports the PubMed path-witness diagnostic: beyond
the scalar hop label, the model must provide a locally valid first-hop witness.

\begin{table}[t]
    \centering
    \scriptsize
    \setlength{\tabcolsep}{3pt}
    \begin{tabular}{@{}p{0.39\columnwidth}cc@{}}
        \toprule
        Method & Answer Acc. & First-hop Joint \\
        \midrule
        Pure GNN path-witness control
               & $48.4{\pm}0.8$
               & $40.4{\pm}2.3$ \\
        \method{} path-witness
               & $\mathbf{96.6{\pm}1.4}$
               & $\mathbf{59.8{\pm}0.4}$ \\
        \bottomrule
    \end{tabular}
    \caption{PubMed path-witness diagnostic. Values are percentages. Answer accuracy measures the path-existence answer. First-hop joint accuracy requires both the correct answer and a valid first-hop witness.}
    \label{tab:path-witness}
\end{table}

Legal-choice scoring is reported separately in \appendixref{app:experiment-details},
Table~\ref{tab:baseline-comparison}.
It provides G-Retriever, LLaGA, and \method with a favorable label-selection path over $\mathcal A$.
The remaining gap suggests that native-generation failure involves both
graph-evidence use and output form.

\subsection{Graph-Token Intervention Diagnostics}

This subsection tests Hypothesis 3: how graph-token changes move generation across output
states. For domain $\kappa$, let
\[
    \begin{aligned}
        R_\kappa & =\mathrm{EM}(I_R;\kappa), \\
        N_\kappa & =\mathrm{EM}(I_N;\kappa), \\
        S_\kappa & =\mathrm{EM}(I_S;\kappa),
    \end{aligned}
\]
where $I_R$, $I_N$, and $I_S$ denote readable, no-graph, and shuffled graph
interfaces, respectively. We compare
\[
    G^{\mathrm{shuf}}_\kappa=S_\kappa-N_\kappa,
    \qquad
    L^{\mathrm{org}}_\kappa=R_\kappa-S_\kappa.
\]
Here $G^{\mathrm{shuf}}_\kappa$ is shuffled residue and
$L^{\mathrm{org}}_\kappa$ is readable organization gain. DBLP has
$S_\kappa<N_\kappa$, while PubMed has $S_\kappa$ close to $R_\kappa$; we
therefore use DBLP and PubMed as representative ablation domains.

Table~\ref{tab:token-interventions} summarizes the shared graph-token
interventions. Readable tokens give the strongest ordinary generation results.
No-graph tokens remove evidence, and shuffled tokens keep token content while
breaking order. DBLP is harmful-shuffle, and PubMed is shuffle-robust, showing
that organization affects output states as well as evidence amount.
Figure~\ref{fig:domain-interventions} visualizes the intervention triangle.
Domain-specific markers such as DBLP \digitprobe and the checkpoint-PCA
variables are defined in \appendixref{sec:diagnostics},
Section~\ref{app:diagnostic-output-vars}.

\begin{figure}[!htbp]
    \centering
    \vspace{-8pt}
    \includegraphics[width=0.9\columnwidth,trim=600 0 0 0,clip]{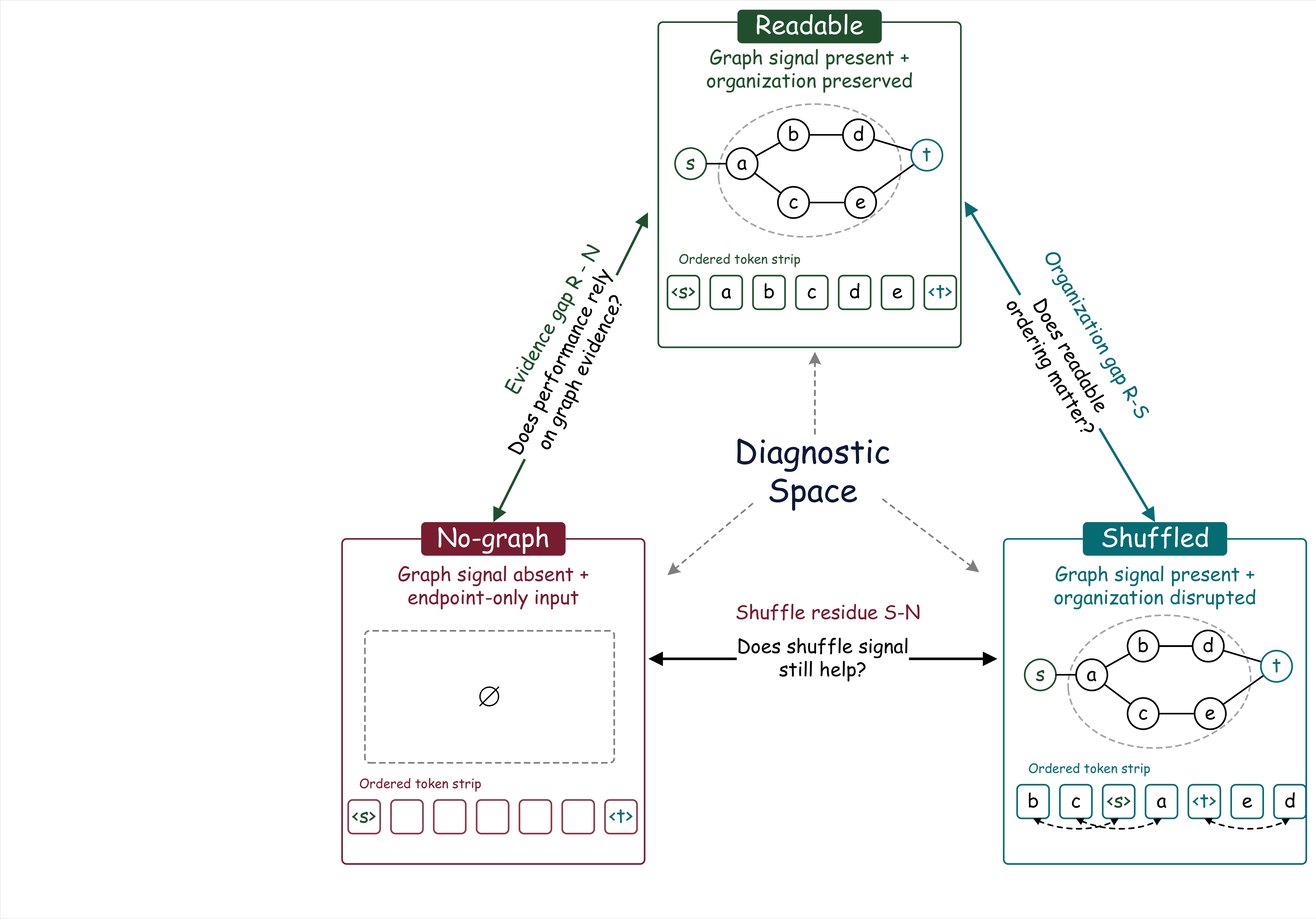}
    \vspace{-5pt}
    \caption{Intervention triangle for the readable, no-graph, and shuffled
        graph-token interfaces. The triangle shows how the three intervention
        conditions separate harmful-shuffle, shuffle-robust, and
        no-graph-saturated regimes.}
    \label{fig:domain-interventions}
    \vspace{-8pt}
\end{figure}

\begin{table}[t]
    \centering
    \small
    \setlength{\tabcolsep}{4pt}
    \begin{tabular}{@{\hspace{4pt}}lccc@{\hspace{4pt}}}
        \toprule
        Domain     & Readable & No graph & Shuffled \\
        \midrule
        DBLP       & $34.0$   & $15.0$   & $6.0$    \\
        Biomedical & $57.8$   & $57.1$   & $52.7$   \\
        GoodReads  & $76.6$   & $9.8$    & $34.2$   \\
        PubMed     & $52.0$   & $19.5$   & $51.3$   \\
        \bottomrule
    \end{tabular}
    \caption{Graph-token interventions. Values are StrictEM percentages from a single representative seed; Table~\ref{tab:native-generation} reports the multi-seed main results.}
    \label{tab:token-interventions}
\end{table}

Biomedical is no-graph-saturated: no-graph input already reaches $57.1$ StrictEM versus $57.8$ with readable tokens, exposing a strong task prior.

\subsection{Interface Ablations on Two Representative Domains}

This subsection tests Hypothesis 2: which interface conditions make graph evidence useful
for ordinary generation. The representative domains come from
Table~\ref{tab:token-interventions}: PubMed is shuffle-robust, and DBLP is
harmful-shuffle.

Table~\ref{tab:two-domain-ablation} reports the ablations. PubMed is most
sensitive to removing query-aware sampling, indicating evidence inclusion as
the limiting factor. DBLP is more sensitive to query syntax and degree cues,
consistent with Figure~\ref{fig:expr-summary}: broken organization is more
harmful in this domain. Together, the ablations support the order in
Section~\ref{sec:method}: include evidence, make it readable, and preserve
local structure.

\begin{table}[t]
    \centering
    \scriptsize
    \setlength{\tabcolsep}{3pt}
    \begin{tabular}{@{\hspace{4pt}}llcc@{\hspace{4pt}}}
        \toprule
        \multicolumn{2}{l}{Domain and selection pattern} & StrictEM $\uparrow$ & $\Delta$ vs. full \\
        \midrule
        \multicolumn{4}{l}{\textit{PubMed}: $N_\kappa\ll S_\kappa\approx R_\kappa$} \\
                                                         & full \method            & $52.0{\pm}3.4$ & -- \\
                                                         & no query-aware sampling & $25.0{\pm}1.1$ & $-27.0$ \\
                                                         & no role perception      & $47.0{\pm}2.3$ & $-5.0$ \\
                                                         & no distance/degree cues & $45.9{\pm}2.7$ & $-6.1$ \\
                                                         & no alignment            & $52.0{\pm}3.7$ & unchanged \\
        \midrule
        \multicolumn{4}{l}{\textit{DBLP}: $S_\kappa<N_\kappa<R_\kappa$} \\
                                                         & full \method            & $36.5{\pm}1.6$ & -- \\
                                                         & no query syntax         & $24.1{\pm}5.2$ & $-12.4$ \\
                                                         & no degree cues          & $22.8{\pm}3.7$ & $-13.7$ \\
                                                         & no role perception      & $32.0{\pm}2.3$ & $-4.5$ \\
                                                         & random order diagnostic & $33.8{\pm}5.0$ & $-2.7$ \\
        \bottomrule
    \end{tabular}
    \caption{Interface ablations on two representative domains. PubMed represents the shuffle-robust regime. DBLP represents the harmful-shuffle regime. Values are StrictEM percentages.}
    \label{tab:two-domain-ablation}
    \vspace{-6pt}
\end{table}

Removing the alignment loss changes PubMed little ($52.0\rightarrow52.0$), consistent with its shuffle-robust regime.

\subsection{HopQA Predictive Validation}

We further test whether \method and graph-only structure solve the same
examples. Table~\ref{tab:oracle-complementarity} reports a sample-level oracle
union on matched $1000$-example test predictions: an example is counted correct
when either \method or Pure GNN is correct. The oracle gains show that graph-only
structure and the graph-language interface solve overlapping but non-identical
subsets.

\begin{table}[H]
    \centering
    \scriptsize
    \setlength{\tabcolsep}{3pt}
    \begin{tabular}{@{}lccccc@{}}
        \toprule
        Domain & \method & Pure GNN & Oracle & \method-only & GNN-only \\
        \midrule
        DBLP       & $36.1$ & $31.3$ & $52.4$ & $21.1$ & $16.3$ \\
        Biomedical & $57.8$ & $27.9$ & $69.6$ & $41.7$ & $11.8$ \\
        GoodReads  & $76.6$ & $40.3$ & $87.2$ & $46.9$ & $10.5$ \\
        PubMed     & $52.0$ & $47.5$ & $72.8$ & $25.2$ & $20.8$ \\
        \bottomrule
    \end{tabular}
    \vspace{-4pt}
    \caption{Uses one matched seed for sample-level oracle analysis; Table~1 reports the headline control results and the multi-seed \method results.}
    \label{tab:oracle-complementarity}
    \vspace{-8pt}
\end{table}

Table~\ref{tab:zeroshot-transfer-main} gives a second validation view:
projected \method tokens preserve recoverable one-hop adjacency across
source-target domain pairs. Together, oracle complementarity and zero-shot
transfer indicate that \method is not merely replacing graph-only structure; it
makes a partially different portion of graph signal usable through the native
graph-language interface.

\begin{table}[H]
    \centering
    \scriptsize
    \setlength{\tabcolsep}{3pt}
    \begin{tabular}{@{}llcc@{}}
        \toprule
        Train & Test & Acc. & AUROC \\
        \midrule
        DBLP       & GoodReads  & $0.923{\pm}0.009$ & $0.958{\pm}0.002$ \\
        GoodReads  & DBLP       & $0.839{\pm}0.015$ & $0.925{\pm}0.005$ \\
        DBLP       & Biomedical & $0.811{\pm}0.021$ & $0.876{\pm}0.007$ \\
        Biomedical & GoodReads  & $0.892{\pm}0.017$ & $0.942{\pm}0.009$ \\
        PubMed     & DBLP       & $0.879{\pm}0.013$ & $0.921{\pm}0.013$ \\
        DBLP       & PubMed     & $0.868{\pm}0.001$ & $0.906{\pm}0.005$ \\
        \bottomrule
    \end{tabular}
    \vspace{-4pt}
    \caption{Zero-shot one-hop adjacency transfer. The probe is trained on source-domain pair labels only and uses no target-domain labels.}
    \label{tab:zeroshot-transfer-main}
    \vspace{-6pt}
\end{table}

\FloatBarrier

\section{Conclusion}
\label{sec:conclusion}
Five-choice HopQA exposes native usability failure in graph-language interfaces. Native strict evaluation and graph-token interventions separate graph-signal existence, interface readability, and native usability within one bounded diagnostic.

\method shows that a sampling-first readable interface improves native usability of exposed graph evidence, reaching native strict EM $36.5$--$76.6\%$ across four domains. Depending on the domain, unreadable graph tokens are harmful, residue-preserving, or no-graph-saturated.

\section*{Ethics Statement}
\label{sec:ethics}
This work uses public graph benchmark and baseline artifacts only for research evaluation; it involves no new personal data collection, user studies, or annotator exposure to offensive content. GRBench files, Planetoid/PubMed, and baseline artifacts are used according to their public releases or published descriptions, and our released code and processed diagnostics are for research use. Formal \method runs use LLaMA-3-8B-Instruct with bf16 and DeepSpeed on dual NVIDIA GeForce RTX 5090 GPUs; main experiments use a maximum of 12 epochs with early-stopping patience 6 and three seeds, path-witness diagnostics use the same maximum schedule, and the total compute is approximately 400 GPU-hours.

\section*{Limitations}
Hop-count QA is used as a controlled receiving-end probe for graph-language interfaces. Its valid single-integer answers make correctness, output validity, and fallback behavior directly measurable. Open-ended settings require additional protocols for comparing semantically equivalent answers.

The path-witness diagnostic is bounded and protocolized. Fully open-ended graph QA, path-witness generation, and KG reasoning require task-specific answer-equivalence and calibration protocols.

Numerical trends across decoder scales remain empirically untested; our
single-backbone results characterize LLaMA-3-8B-Instruct. Proposition~2 and
Corollary~2.1 characterize a fixed-interface information bound that does not
depend on decoder size.

\section*{Acknowledgements}
This work was supported by the Academy Member (Academician) Major Science and Technology Innovation Project under the ``Two Zones'' Science and Technology Development Program (Grant No. 2024LQ02001), the National Natural Science Foundation of China (Grant Nos. 62262064 and 62466057), the Key R\&D Project of Xinjiang Uygur Autonomous Region (Grant No. 2022295358), and the Xinjiang University Doctoral Postgraduate Innovation Project (Grant No. XJDX2025YJS084).

\bibliography{custom}

\appendix
\section{Notation Reference}
\label{app:notation}
Table~\ref{tab:notation-reference} collects the main symbols used across the paper. The table is intended as a quick reference for the task definition, interface conditions, and intervention diagnostics.

\begin{table*}[t]
\centering
\small
\setlength{\tabcolsep}{6pt}
\begin{tabular}{p{0.30\textwidth}p{0.60\textwidth}}
\toprule
Meaning & Symbol \\
\midrule
Graph & $G=(V,E,X)$ \\
Query & $q=(s,t)$ \\
Answer set & $\mathcal A=\{1,2,3,4,5\}$ \\
Gold answer & $y_i=d_{G_i}(s_i,t_i)$ \\
Model output & $\hat y_i$ \\
Evaluation set & $\mathcal D=\{(G_i,q_i,y_i)\}_{i=1}^{n}$ \\
Sampled subgraph & $G_{\mathrm{sub},i}=S_B(G_i,q_i)$ \\
Exposed decoder interface & $I_B(G_i,q_i)=I(G_{\mathrm{sub},i},q_i)=(T(q_i),M(q_i))$ \\
Ordinary generation path & $m_{\mathrm{nat}}$ \\
Graph-check set & $\mathcal C_{\mathrm{graph}}$ \\
Signal-use gap & $\Delta_{\mathrm{su}}^{+}(m)$ \\
Interface readability coordinates & $\Phi(I_B,q)=(C_{\mathrm{local}}(q),S_{\mathrm{role}}(q),A_{\mathrm{adj}}(q))$ \\
Output state & $O(\hat y_i)\in\mathcal O$ \\
Output-state distribution & $p_\theta(o\mid I)$ \\
Readable, no-graph, shuffled interfaces & $I_R,I_N,I_S$ \\
Domain & $\kappa$ \\
Readable, no-graph, shuffled exact match & $R_\kappa,N_\kappa,S_\kappa$ \\
Shuffled residue & $G^{\mathrm{shuf}}_\kappa=S_\kappa-N_\kappa$ \\
Readable organization gain & $L^{\mathrm{org}}_\kappa=R_\kappa-S_\kappa$ \\
\bottomrule
\end{tabular}
\caption{Unified notation reference used across the main text and appendix.}
\label{tab:notation-reference}
\end{table*}

\section{Datasets and Metrics}
\label{app:datasets-metrics}
We evaluate GRBench-style shortest-hop question answering on four graph
domains: DBLP, Biomedical, GoodReads, and PubMed. Each example asks for the hop
distance between a source node and a target node. The gold answer belongs to
the bounded answer set
\[
    \mathcal A=\{1,2,3,4,5\}.
\]
This bounded label space makes random guessing and majority-label prediction
directly comparable across domains.

\subsection{Datasets and Splits}

Table~\ref{tab:dataset-overview} summarizes the graph statistics used in the
controlled HopQA protocol. The processed edge counts correspond to the graph
files used for evaluation.

\paragraph{Dataset acquisition.}
DBLP, Biomedical, and GoodReads are obtained from the public GRBench release
\citep{jin2024graphcot}. Their domain QA files are available from the Hugging
Face dataset \texttt{PeterJinGo/GRBench}, and the corresponding
graph-environment files are distributed through the GRBench download link in
the Graph-CoT repository. PubMed is constructed from the public
Planetoid/PubMed citation graph \citep{yang2016planetoid}. We use these graph
files as source graphs and construct new balanced HopQA splits from
shortest-hop queries; the original GRBench QA labels are not used as HopQA
training or test labels.

\begin{table*}[t]
    \centering
    \small
    \setlength{\tabcolsep}{6pt}
    \begin{tabular}{lrrp{0.34\textwidth}}
        \toprule
        Domain     & Nodes      & Processed edges & Role in evaluation                                                     \\
        \midrule
        DBLP       & 11,453,104 & 87,551,311      & Large bibliographic graph with strong topology dependence.             \\
        Biomedical & 47,031     & 3,363,428       & Biomedical relation graph with dense processed connectivity.           \\
        GoodReads  & 3,784,086  & 11,704,748      & Recommendation-style graph with strong semantic and hub effects.       \\
        PubMed     & 19,717     & 88,648          & Citation-style graph used for both HopQA and path-witness diagnostics. \\
        \bottomrule
    \end{tabular}
    \caption{Graph statistics for the four controlled HopQA domains.}
    \label{tab:dataset-overview}
\end{table*}

Each formal domain uses matched train, validation, and test splits. The split
is balanced across the five hop labels. Therefore, both uniform random guessing
and fixed majority-label prediction have expected accuracy $0.200$.

\begin{table}[t]
    \centering
    \footnotesize
    \setlength{\tabcolsep}{3pt}
    \begin{tabular}{@{}lccc@{}}
        \toprule
        Split      & Per-hop examples & Total examples & Label prior \\
        \midrule
        Train      & $400 \times 5$   & 2000           & balanced    \\
        Validation & $40 \times 5$    & 200            & balanced    \\
        Test       & $200 \times 5$   & 1000           & balanced    \\
        \bottomrule
    \end{tabular}
    \caption{Balanced split construction for each domain. Each hop label in $\{1,2,3,4,5\}$ has the same number of examples.}
    \label{tab:balanced-splits}
\end{table}

The resulting chance and majority baselines are shown in
Table~\ref{tab:chance-baselines}. These baselines are used to compute
$\Delta$Chance in the main experimental tables.

\begin{table}[t]
    \centering
    \small
    \setlength{\tabcolsep}{6pt}
    \begin{tabular}{lcc}
        \toprule
        Baseline & Decision rule                   & Accuracy \\
        \midrule
        Random   & uniform guess over $\mathcal A$ & $20.0$   \\
        Majority & fixed majority label            & $20.0$   \\
        \bottomrule
    \end{tabular}
    \caption{Chance-level baselines under the balanced five-label HopQA protocol. Values are percentages.}
    \label{tab:chance-baselines}
\end{table}

\subsection{Metrics}

Table~\ref{tab:metric-definitions} defines the metrics used in the main
experiments. Strict exact match is the headline metric. Other metrics are audit
views of the generated output surface or auxiliary checks for graph-signal
recovery.

\begin{table*}[t]
    \centering
    \small
    \setlength{\tabcolsep}{4pt}
    \begin{tabular}{p{0.20\textwidth}p{0.34\textwidth}p{0.35\textwidth}}
        \toprule
        Metric                                                              & Definition                                                       & Use                  \\
        \midrule
        StrictEM                                                            &
        $\frac{1}{n}\sum_i \mathbf 1\{\operatorname{exact}(\hat y_i)=y_i\}$ &
        Main metric for ordinary generation. The output must be exactly one legal integer in $\mathcal A$.                                                            \\

        ParsedEM                                                            & $\frac{1}{n}\sum_i \mathbf 1\{\operatorname{extract}(\hat
        y_i)=y_i\}$                                                         & Output-surface audit using first-integer extraction when one exists, reported alongside StrictEM.                                                                                                                                    \\

        $\Delta$Chance                                                      &
        $\mathrm{StrictEM}-20.0$                                            &
        Difference from the balanced random/majority baseline.                                                                                                        \\

        SingleIntRate                                                       & $\frac{1}{n}\sum_i \mathbf 1\{\hat y_i \text{ is exactly one
        integer in } \mathcal A\}$                                          & Checks whether native generation produces a legal
        single-integer answer.                                                                                                                                        \\

        DominantAnswerRate                                                  & $\max_{a\in\mathcal A}\frac{1}{n}\sum_i\mathbf
        1\{\operatorname{extract}(\hat y_i)=a\}$                            & Measures collapse toward one
        dominant extracted answer. Lower is better.                                                                                                                   \\

        Path Found                                                          & fraction of examples where graph execution recovers a valid path &
        Audit metric for retrieval-execution diagnostics outside the LLM decoder.                                                                                     \\

        Answer Acc.                                                         & accuracy of the path-existence answer                            & Used in path-witness
        diagnostics.                                                                                                                                                  \\

        First-hop Joint Acc.                                                & fraction of examples with both correct answer and valid
        first-hop witness                                                   & Checks whether the predicted first hop is locally valid and
        moves toward the target.                                                                                                                                      \\ \bottomrule
    \end{tabular}
    \caption{Metric definitions. StrictEM is the primary metric. ParsedEM, SingleIntRate, DominantAnswerRate, Path Found, and path-witness metrics are used as diagnostic audits.}
    \label{tab:metric-definitions}
\end{table*}

For generated answers, $\operatorname{exact}(\hat y_i)$ returns a value only
when the output is exactly one valid integer in $\mathcal A$. The function
$\operatorname{extract}(\hat y_i)$ returns the first generated integer when one
exists. Outputs outside exact single-integer form are counted as StrictEM
failures, even when ParsedEM can recover a number.

\subsection{Leakage and Split Audits}

GoodReads receives an additional overlap audit because recommendation-style
graphs can contain hub shortcuts and repeated textual patterns.
Table~\ref{tab:goodreads-leakage-audit} reports the observed overlap checks
across train, validation, and test splits. All audited overlap types are zero.

\begin{table}[t]
    \centering
    \small
    \setlength{\tabcolsep}{5pt}
    \begin{tabular}{lc}
        \toprule
        Overlap type            & Observed overlap \\
        \midrule
        Qid overlap             & 0                \\
        Question text overlap   & 0                \\
        Question-answer overlap & 0                \\
        Endpoint-pair overlap   & 0                \\
        Exact-path overlap      & 0                \\
        Reverse-path overlap    & 0                \\
        \bottomrule
    \end{tabular}
    \caption{GoodReads leakage audit across train, validation, and test splits.}
    \label{tab:goodreads-leakage-audit}
\end{table}

We also inspect GoodReads performance by hop label as a sanity check against a
single-hop shortcut explanation. Table~\ref{tab:goodreads-hopwise} reports
hop-wise \method scores on the GoodReads test split.

\begin{table}[t]
    \centering
    \small
    \setlength{\tabcolsep}{6pt}
    \begin{tabular}{lccccc}
        \toprule
        Hop label     & 1     & 2     & 3     & 4     & 5     \\
        \midrule
        \method score & 0.608 & 0.877 & 0.547 & 0.857 & 0.943 \\
        \bottomrule
    \end{tabular}
    \caption{GoodReads hop-wise \method scores under the balanced HopQA test split. Values are proportions.}
    \label{tab:goodreads-hopwise}
\end{table}

\section{Experiment Details}
\label{app:experiment-details}

This appendix reports compact implementation details for the formal
experiments: run configuration, generation settings, controls, and run records.

\subsection{Run Configuration and Controls}

Table~\ref{tab:run-control-summary} consolidates the formal \method protocol.

\begin{table*}[t]
    \centering
    \footnotesize
    \setlength{\tabcolsep}{3pt}
    \begin{tabular}{p{0.14\textwidth}p{0.24\textwidth}p{0.50\textwidth}}
        \toprule
        Block & Setting & Value \\
        \midrule
        \multirow{4}{*}{Model}
        & LLM backbone & LLaMA-3-8B-Instruct; unfrozen during \method training; bf16 enabled; max new tokens 32. \\
        & Graph encoder & One-layer GAT-style encoder; hidden dimension 384; four attention heads; implementation name \texttt{vpa\_aegf\_gat\_light}. \\
        & Projector & Two-layer MLP projector maps graph encoder outputs into the LLM embedding space. \\
        & Sampling/tokens & Node-token mode with up to 32 projected nodes; source and target are forced into the seed set when available, with degree-ranked local expansion. \\
        \midrule
        \multirow{2}{*}{Structure}
        & Structural roles/order & Source, target, endpoint-proximity, and context roles; ordered by role, endpoint distances, degree, and traversal index. Gold labels and stored witness paths are not input features. \\
        & Alignment target & Projected node tokens are aligned against the self-looped degree-normalized sampled adjacency with fixed validation-selected $\lambda=0.25$. \\
        \midrule
        \multirow{2}{*}{Optim.}
        & Learning setup & Learning rate $8\times10^{-6}$; maximum 12 epochs; early-stopping patience 6; 2 warmup epochs; batch size 1; gradient accumulation 2; bf16; DeepSpeed; trainable LLM. \\
        & Seeds & $0/1/12138$. \\
        \midrule
        \multirow{4}{*}{Gen./ctrl}
        & Generation path & HuggingFace native generation path. \\
        & Primary scoring & strict exact match. \\
        & Audit scoring & first-integer extraction when one exists; invalid outputs count as StrictEM failures. \\
        & Non-main controls & SubgraphRAG retrieval-execution control, text-serialization control, Graph-CoT zero-shot control, and legal-choice scoring over $\{1,\ldots,5\}$. \\
        \bottomrule
    \end{tabular}
    \caption{Consolidated run, optimization, generation, and control settings for the formal \method protocol.}
    \label{tab:run-control-summary}
\end{table*}

The alignment loss and full training objective are defined in
Section~\ref{sec:training-objective-diagnostics}. This table records the
validated run settings used to instantiate that objective.

G-Retriever and LLaGA are evaluated using their released checkpoints and
method-specific training recipes; the 12-epoch S\textsuperscript{2}GE schedule
above applies only to our trainable runs.

\subsection{Baseline Output-State Audit}
\label{app:baseline-error-audit}

Table~\ref{tab:baseline-error-audit} summarizes representative native-generation
failure states for G-Retriever and LLaGA. The audit uses wrong generated
outputs from the preserved prediction run records and interprets ParsedEM as an
output-surface audit.

\begin{table*}[t]
    \centering
    \footnotesize
    \setlength{\tabcolsep}{4pt}
    \begin{tabular}{@{}p{0.12\textwidth}p{0.12\textwidth}p{0.24\textwidth}p{0.42\textwidth}@{}}
        \toprule
        Method & Domain pattern & Typical failure state & Representative output form \\
        \midrule
        G-Retriever
        & DBLP, GoodReads, PubMed
        & Numeric continuation after an answer-like digit
        & \texttt{3</s>4</s>3</s>4</s>...} or \texttt{1</s>1</s>1</s>...}; the first digit can match the gold label while the whole output is not a legal single integer. \\

        G-Retriever
        & Biomedical
        & Separator and repeated-digit continuation
        & \texttt{4</s> || 5</s>4</s>...} or \texttt{5</s>5</s>5</s>...}; extraction may recover a digit while native StrictEM remains zero. \\

        LLaGA
        & DBLP
        & Explanatory natural-language answer or no completed digit answer
        & \texttt{According to the graph, at least 3 people...} or an unfinished explanation with no legal answer token. \\

        LLaGA
        & Biomedical, PubMed
        & Prompt/dialogue continuation and answer-space escape
        & \texttt{5 USER: What is...} or text giving an invalid hop such as \texttt{8}; the output leaves the strict single-integer surface. \\

        LLaGA
        & GoodReads
        & Natural-language sentence containing an extractable digit
        & \texttt{The shortest hop count ... is 3.}; ParsedEM can count the digit, while ordinary generation still violates the requested output form. \\
        \bottomrule
    \end{tabular}
    \caption{Representative output-state failures for G-Retriever and LLaGA. These examples explain why ParsedEM is treated as an audit: it can extract digits from numeric continuations, explanatory prose, or dialogue continuations that are all StrictEM failures.}
    \label{tab:baseline-error-audit}
\end{table*}

\subsection{Control and Baseline Comparison}

Table~\ref{tab:baseline-comparison} records the auxiliary controls and
baseline variants. SubgraphRAG is the main-text name for the
retrieval-execution control; the appendix keeps the retrieval-execution mapping explicit.

\begin{table*}[t]
    \centering
    \footnotesize
    \setlength{\tabcolsep}{4pt}
    \begin{tabular}{@{}p{0.19\textwidth}p{0.16\textwidth}p{0.22\textwidth}p{0.35\textwidth}@{}}
        \toprule
        Control & Main-text name & Reported value & Concise role or failure label \\
        \midrule
        Retrieval-execution check & SubgraphRAG & Path Found: DBLP $24.9$, Biomedical $21.5$, GoodReads $58.2$, PubMed $50.6$ & Path Found measures executable-path coverage in the retrieved evidence graph; EM in Table~\ref{tab:native-generation} measures bounded hop-label accuracy. Biomedical and PubMed differ slightly because a found path can still map to a non-matching hop label. \\
        Text serialization baseline & raw-text control & DBLP $17.0/0.0/0.0\%$, Biomedical $16.6/0.0/0.0\%$, GoodReads $18.5/0.0/0.0\%$ & 128-token zero-shot adjacency lists; entries are permissive/strict/legal rates, with answer-1 dominance $83$--$89\%$. \\
        Graph-CoT zero-shot control & Graph-CoT & DBLP $0/1000$ valid completed answers & Responses halt before a valid \texttt{Finish[\ldots]} answer, so the task surface is mismatched. \\
        Legal-choice scoring & legal-choice control & DBLP R/N/S: G-Retriever $46.7/21.3/15.4$; LLaGA $74.1/38.6/27.2$; \method $88.6/62.0/45.1$ & Constrained scoring over $\mathcal A$. Favorable label-selection path for G-Retriever, LLaGA, and \method; the DBLP interface intervention is visualized in Figure~3(a). \\
        \bottomrule
    \end{tabular}
    \caption{Compact comparison of auxiliary baselines and controls. SubgraphRAG Path Found reports executable-path coverage, while SubgraphRAG EM in Table~\ref{tab:native-generation} reports bounded hop-label accuracy; the two can differ when a recovered path maps to a non-matching label.}
    \label{tab:baseline-comparison}
\end{table*}

\subsection{Run Records}

For reproducibility, each run record stores the domain, seed, prediction file
path, evaluation path, and ordered test-ID checksum. These records are used to
verify that seed-level reports and matched diagnostic comparisons are computed
over the intended test order.

\begin{table}[t]
    \centering
    \footnotesize
    \setlength{\tabcolsep}{3pt}
    \begin{tabular}{@{}p{0.34\columnwidth}p{0.54\columnwidth}@{}}
        \toprule
        Record field             & Purpose                                                                                        \\
        \midrule
        Domain                   & identifies DBLP, Biomedical, GoodReads, or PubMed                                              \\
        Seed                     & identifies the run seed, such as $0/1/12138$                                                   \\
        Prediction path          & locates the generated output or diagnostic prediction file                                     \\
        Evaluation path          & identifies native generation, graph-only control, retrieval execution, or legal-choice control \\
        Ordered test-ID checksum & verifies that compared predictions use the same test order                                     \\
        \bottomrule
    \end{tabular}
    \caption{Run fields recorded for reproducibility and matched evaluation.}
    \label{tab:run-records}
\end{table}

\section{Diagnostic Protocol}
\label{sec:diagnostics}
\noindent\textbf{Diagnostic target.}
The method section defines the graph-language interface. This appendix defines the observable quantities used to audit whether that interface remains readable during training and evaluation.

\subsection{Readability}

Interface readability means that the projected token stream preserves enough
local structure to recover endpoint roles, local adjacency, and
endpoint-conditioned path order. Downstream EM depends on readability and also
reflects decoder optimization, so we audit the interface with a frozen-probe
protocol in addition to the main task.

The within-domain probe samples positive and negative one-hop node pairs from
local subgraphs and trains a lightweight binary head on frozen pair features.
The cross-domain probe trains the same head on one source domain and evaluates
it on another target domain using frozen target representations. Table~\ref{tab:zeroshot-transfer-main}
reports the six-direction zero-shot matrix in the main text; this probe isolates
adjacency recoverability from hop-decoder adaptation.

\subsection{Path-Witness Protocol}

The path-witness diagnostic keeps the task bounded while requiring more than a
scalar hop label. Each example asks for a protocolized path-existence answer
and a witness field that is checked by local adjacency and endpoint progress.
It is a task-surface diagnostic: the model must recover topology evidence in a
structured output, while evaluation stays bounded.

The graph-only path-witness control uses the same path-witness protocol. It is
not the five-way hop classifier reported in Table~\ref{tab:native-generation}.
The main PubMed path-witness results are reported in Table~\ref{tab:path-witness}.

\subsection{Graph-Token Intervention Triangle}

Readable, no-graph, and shuffled interfaces form the shared intervention
coordinates used to separate domain regimes. The main text visualizes these
coordinates in Figure~\ref{fig:domain-interventions}. The two contrasts are
shuffled residue $G^{\mathrm{shuf}}_\kappa=S_\kappa-N_\kappa$ and readable
organization gain $L^{\mathrm{org}}_\kappa=R_\kappa-S_\kappa$.

\subsection{Output Variables}
\label{app:diagnostic-output-vars}

We summarize checkpoint behavior with
\[
    \macrostate=(e,\ell,d,s,c).
\]
Here $e$ is exact match, $\ell$ is legal-output rate, $d$ is long-digit
continuation rate, $s$ is single-integer output rate, and $c$ is
dominant-answer concentration. DBLP uses \digitprobe as the most stable failure
observable. It counts generations with an ID-like numeric continuation of
length at least six, dominated by repeated 8s. GoodReads and PubMed emphasize
dominant-answer concentration. Biomedical additionally requires legality and
format-collapse monitoring.

The DBLP checkpoint-PCA value ($98.52\%$) is the first data-fitted component of
this macro-state sequence; the prototype-axis value ($98.52\%$) is variance
explained by a fixed bad-to-good axis.

\subsection{Probe Selection}

Probe discovery follows a fixed workflow.
\begin{enumerate}[leftmargin=1.2em]
    \item collect raw generations from validation or intervention sweeps;
    \item measure extracted-integer frequency, repeated fragments, empty outputs, and
          dominant-answer concentration;
    \item compare those quantities against the gold-label distribution;
    \item promote the most stable anomaly to a domain probe;
    \item track that probe jointly with EM and legality across checkpoints.
\end{enumerate}
The domain-specific probes follow this procedure.

\section{Supplementary Diagnostics}
\label{app:supp-diagnostics}
\paragraph{Role Implementation.}
\label{app:role-implementation}
Role ids mark source, target, local endpoint-proximity, and context inside the sampled subgraph. Source and target are query inputs. Endpoint proximity uses only BFS distances and connectivity within the bounded sample. Full-graph shortest paths, hop labels, and stored witnesses are evaluation-only objects.

\subsection{Formal Boundary Results}
\label{app:formal-boundary}

This appendix formalizes the boundedness assumptions behind the diagnostic. The three propositions show that bounded HopQA queries are graph-structure functions, bounded exposed interfaces impose an information boundary, and bounded output-state audits make interface changes measurable. The hypotheses are tested empirically in Section~\ref{sec:exp}.

\begin{proposition}[Bounded HopQA queries are graph-structure functions]
    Let $G=(V,E,X)$ be a simple loopless unweighted graph, and let $q=(s,t)$. The HopQA label
    \[
        y(G,q)=d_G(s,t)
    \]
    is determined by the adjacency matrix $A_G$. Moreover, if
    $D_G=(d_G(u,v))_{u,v\in V}$ is the all-pair hop distance matrix, then $D_G$
    determines $A_G$. For $u\ne v$,
    \[
        (A_G)_{uv}=1
        \quad\Longleftrightarrow\quad
        d_G(u,v)=1 .
    \]
\end{proposition}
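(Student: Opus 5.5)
The argument has two parts, one for each direction of the statement. Neither uses anything from the main text beyond the definitions of $G$, $q$, and $d_G$. Throughout, I take $d_G(u,v)$ to be the minimum length of a walk from $u$ to $v$ in the graph with edge set $\{\{u,v\}:(A_G)_{uv}=1\}$, with $d_G(u,v)=\infty$ if no such walk exists and $d_G(u,u)=0$. Node features $X$ play no role, because $d_G$ depends only on $E$.

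\textbf{Step 1 (labels are functions of $A_G$).} I show that $d_G(s,t)$ is computable from $A_G$ alone. The cleanest route is the power characterization
\[
d_G(s,t)=\min\{k\ge 0:(A_G^k)_{st}>0\}.
\]
It rests on the standard induction that $(A_G^k)_{st}$ counts walks of length $k$ from $s$ to $t$, with the base case $A_G^0=I$. The minimal such $k$ is attained by a walk, and a shortest walk is automatically a path. If the set is empty, the distance is $\infty$, and this case falls outside the bounded label set $\mathcal A$. Two graphs on the same vertex set with equal adjacency matrices therefore have equal $d_G$ for every query, so $y(G,q)$ factors through $A_G$. An equivalent argument is the BFS layer recursion $L_0=\{s\}$, $L_{k+1}=N(L_k)\setminus\bigcup_{j\le k}L_j$, where $N(\cdot)$ is read off the rows of $A_G$.

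\textbf{Step 2 ($D_G$ determines $A_G$).} I prove the displayed equivalence for $u\ne v$. If $(A_G)_{uv}=1$, then $(u,v)$ is a walk of length $1$, so $d_G(u,v)\le 1$. Since $u\ne v$, the only length-$0$ walk from $u$ ends at $u$, which rules out distance $0$, so $d_G(u,v)=1$. Conversely, if $d_G(u,v)=1$, there is a walk $u=w_0,w_1=v$ of length one, and by definition this means $\{u,v\}\in E$, i.e.\ $(A_G)_{uv}=1$. The diagonal is fixed by looplessness: $(A_G)_{uu}=0$ for all $u$. Hence $A_G=\mathbf 1\{D_G=1\}$ entrywise off the diagonal, with zero diagonal, and so $A_G$ is a function of $D_G$.

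\textbf{Main obstacle.} There is no real technical obstacle. The only care needed is in using the hypotheses correctly. Simplicity ensures that $A_G$ is $0/1$, so that $d_G=1$ corresponds to exactly one edge indicator. Looplessness pins down the diagonal, which $D_G$ cannot reveal because $d_G(u,u)=0$ always. The restriction $u\ne v$ is needed so that distance $0$ is not confused with adjacency. I will state these dependencies explicitly and note that directed graphs work the same way with ordered pairs.
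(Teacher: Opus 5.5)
Your proposal is correct and follows the paper's proof. Like the paper, you argue that $A_G$ fixes $E$ and hence the minimum walk length $d_G(s,t)$, then use $u\ne v$ with simplicity to get $(A_G)_{uv}=1 \Longleftrightarrow d_G(u,v)=1$, and looplessness to fix the zero diagonal; your matrix-power/BFS characterization in Step 1 is simply a more explicit, constructive version of the paper's ``the edge set is fixed, so the minimum is fixed'' step.
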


\begin{proof}[\textbf{Proof 1}]
    Given $A_G$, the edge set $E$ is fixed. Hence the set of all paths from $s$ to $t$ is fixed, and therefore
    \[
        \begin{aligned}
            d_G(s,t)=\min\{k:\  & \exists v_0,\ldots,v_k, \\
                                & v_0=s,\ v_k=t,          \\
                                & (v_{i-1},v_i)\in E,     \\
                                & i=1,\ldots,k\}.
        \end{aligned}
    \]
    This quantity is fixed.

    For $u\ne v$, since $G$ is simple, loopless, and unweighted,
    \[
        (u,v)\in E
        \quad\Longleftrightarrow\quad
        d_G(u,v)=1 .
    \]
    By the definition of adjacency matrix,
    \[
        (A_G)_{uv}=1
        \quad\Longleftrightarrow\quad
        (u,v)\in E .
    \]
    Thus
    \[
        (A_G)_{uv}=1
        \quad\Longleftrightarrow\quad
        d_G(u,v)=1 ,
        \qquad u\ne v .
    \]
    Since $G$ is loopless,
    \[
        (A_G)_{uu}=0 .
    \]
    Therefore $D_G$ determines $A_G$.
\end{proof}

\begin{proposition}[Bounded exposed interfaces impose an information boundary]
    Let $\mathcal X$ be a set of labeled examples, with label $y(x)\in\mathcal Y$. Let
    \[
        I:\mathcal X\to\mathcal Z
    \]
    be the exposed interface. A decoder observes $x$ only through $I(x)$, so its
    output distribution is
    \[
        h(\cdot\mid I(x))\in\Delta(\mathcal Y).
    \]
    For any $z\in\mathcal Z$, define
    \[
        \mathcal X_z=\{x\in\mathcal X:I(x)=z\}.
    \]
    Then
    \[
        \begin{aligned}
             &
            \max_h
            \frac{1}{|\mathcal X_z|}
            \sum_{x\in\mathcal X_z}
            h(y(x)\mid z)
            \\
             & \quad =
            \max_{a\in\mathcal Y}
            \frac{|\{x\in\mathcal X_z:y(x)=a\}|}{|\mathcal X_z|}.
        \end{aligned}
    \]
    In particular, if $\mathcal X_z$ contains two examples with different labels,
    no decoder observing only $z$ can be correct on both.
\end{proposition}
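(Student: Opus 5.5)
The key observation is that, once $z$ is fixed, the decoder's output distribution $h(\cdot\mid z)$ is one vector $p\in\Delta(\mathcal Y)$ shared by every $x\in\mathcal X_z$. The objective is therefore linear in $p$. We implicitly assume $\mathcal X_z$ is finite and nonempty, and we treat $h(\cdot\mid z)$ as ranging over all of $\Delta(\mathcal Y)$. Write $n_a=|\{x\in\mathcal X_z:y(x)=a\}|$ and $n=|\mathcal X_z|$. Grouping the sum by label gives
\[
\frac{1}{n}\sum_{x\in\mathcal X_z}h(y(x)\mid z)=\sum_{a\in\mathcal Y}p(a)\,\frac{n_a}{n}.
\]
This is an average of the numbers $n_a/n$ with weights $p(a)$.

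\textbf{Upper bound.} Since $p(a)\ge 0$ and $\sum_a p(a)=1$,
\[
\sum_a p(a)\,\frac{n_a}{n}\le \max_a \frac{n_a}{n}\sum_a p(a)=\max_a \frac{n_a}{n}.
\]
If $\mathcal Y$ is infinite, only the finitely many labels with $n_a>0$ contribute, so the maximum exists.

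\textbf{Attainment.} Choose $a^\ast\in\arg\max_a n_a$ and set $h(\cdot\mid z)=\delta_{a^\ast}$. This yields exactly $n_{a^\ast}/n$, so the maximum over $h$ exists and equals the stated value.

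\textbf{The ``in particular'' claim.} Suppose $x_1,x_2\in\mathcal X_z$ with $y(x_1)=a_1\neq a_2=y(x_2)$. Any decoder emits the same $p$ on both inputs, and $p(a_1)+p(a_2)\le 1$. So for a deterministic decoder, $p$ is a point mass and it cannot place mass one on both $a_1$ and $a_2$. For a stochastic decoder, the two success probabilities sum to at most $1$, so they cannot both equal one. Equivalently, the maximal average success on $\{x_1,x_2\}$ is $\tfrac12<1$ by the displayed identity applied to that pair.

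There is no real obstacle here. The only delicate point is stating precisely that the decoder sees $x$ solely through $z$, so that $h(\cdot\mid z)$ is a single distribution across the fiber; that is exactly what makes the objective linear and the bound tight. The same fiberwise argument, averaged over $z$ with weights $|\mathcal X_z|/|\mathcal X|$, gives the global bound $\sum_z \max_a n_a(z)/|\mathcal X|$, which is what the paper's scale-independence corollary would use.
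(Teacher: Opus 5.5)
Your proposal is correct and follows the paper's proof essentially step for step. You group the fiber sum by the label counts $n_a$, bound the resulting convex combination by $\max_a n_a/|\mathcal X_z|$, attain that bound with a point mass on an argmax label, and deduce the ``in particular'' claim from $h(y_1\mid z)+h(y_2\mid z)\le 1$; your extra remarks on finiteness of $\mathcal X_z$, infinite $\mathcal Y$, and the aggregated bound over $z$ are harmless refinements that the paper leaves implicit.
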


\begin{proof}[\textbf{Proof 2}]
    Fix $z$. For all $x\in\mathcal X_z$,
    \[
        I(x)=z .
    \]
    Thus the decoder uses the same distribution $h(\cdot\mid z)$. Let
    \[
        n_a=|\{x\in\mathcal X_z:y(x)=a\}|.
    \]
    The average correctness is
    \[
        \frac{1}{|\mathcal X_z|}
        \sum_{x\in\mathcal X_z}
        h(y(x)\mid z)
        =
        \sum_{a\in\mathcal Y}
        \frac{n_a}{|\mathcal X_z|}
        h(a\mid z).
    \]
    Since $h(\cdot\mid z)$ is a probability distribution,
    \[
        h(a\mid z)\ge 0,
        \qquad
        \sum_{a\in\mathcal Y}h(a\mid z)=1 .
    \]
    Therefore
    \[
        \begin{aligned}
            \sum_{a\in\mathcal Y}
            \frac{n_a}{|\mathcal X_z|}
            h(a\mid z)
            \le &
            \left(\max_{a\in\mathcal Y}
            \frac{n_a}{|\mathcal X_z|}\right)
            \sum_{a\in\mathcal Y}h(a\mid z)
            \\
            =   &
            \max_{a\in\mathcal Y}
            \frac{n_a}{|\mathcal X_z|}.
        \end{aligned}
    \]
    The bound is attained by choosing any
    \[
        a^\star\in\arg\max_{a\in\mathcal Y} n_a
    \]
    and setting
    \[
        h(a^\star\mid z)=1 .
    \]
    Hence equality holds.

    If $x_1,x_2\in\mathcal X_z$ and
    \[
        y(x_1)\ne y(x_2),
    \]
    let
    \[
        y_1=y(x_1),\qquad y_2=y(x_2).
    \]
    Since $y_1\ne y_2$,
    \[
        h(y_1\mid z)+h(y_2\mid z)\le 1 .
    \]
    Thus no single decoder distribution conditioned only on $z$ can be correct on
    both examples.
\end{proof}

\paragraph{Corollary 2.1 (Decoder scaling cannot remove interface-induced ambiguity).}
Under a fixed exposed interface, enlarging the decoder family may improve
approximation, optimization, and output formation, but cannot recover
distinctions that have already been removed by that interface.

\begin{proposition}[Bounded output-state audits make interface changes measurable]
    Let $\mathcal O$ be a finite set of output states, and let
    \[
        O:\hat{\mathcal Y}\to\mathcal O
    \]
    map a generated output to its state. For an interface condition $I$, define
    \[
        p_\theta(o\mid I)
        =
        \mathbb P_\theta(O(\hat y)=o\mid I).
    \]
    For two interface conditions $I_1,I_2$, define
    \[
        D_{\mathcal O}(I_1,I_2)
        =
        \frac12
        \sum_{o\in\mathcal O}
        \left|
        p_\theta(o\mid I_1)-p_\theta(o\mid I_2)
        \right|.
    \]
    Then
    \[
        D_{\mathcal O}(I_1,I_2)\ge 0,
    \]
    and
    \[
        \begin{aligned}
            D_{\mathcal O}(I_1,I_2)=0
            \quad\Longleftrightarrow\quad
             & p_\theta(o\mid I_1)                             \\
             & =p_\theta(o\mid I_2),\ \forall o\in\mathcal O .
        \end{aligned}
    \]
    Consequently, if there exists $o^\star\in\mathcal O$ such that
    \[
        p_\theta(o^\star\mid I_1)\ne p_\theta(o^\star\mid I_2),
    \]
    then
    \[
        D_{\mathcal O}(I_1,I_2)>0 .
    \]
\end{proposition}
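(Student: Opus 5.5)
The plan is to treat $D_{\mathcal O}(I_1,I_2)$ as a finite sum of nonnegative terms and read off all three claims from that structure. Since $\mathcal O$ is finite and each $p_\theta(\cdot\mid I_j)$ is a probability distribution on $\mathcal O$ (the pushforward of the generation distribution under $O$), every summand $\left|p_\theta(o\mid I_1)-p_\theta(o\mid I_2)\right|$ is a well-defined nonnegative real number. The sum is therefore finite, and $D_{\mathcal O}(I_1,I_2)$ is simply the total variation distance between the two output-state distributions.

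\textbf{Nonnegativity.} An absolute value is nonnegative, a finite sum of nonnegative reals is nonnegative, and multiplying by $\tfrac12>0$ preserves the sign. Hence $D_{\mathcal O}(I_1,I_2)\ge 0$.

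\textbf{Characterization of zero.} For the direction ($\Leftarrow$), if $p_\theta(o\mid I_1)=p_\theta(o\mid I_2)$ for every $o$, then every summand vanishes, so $D_{\mathcal O}=0$. For the direction ($\Rightarrow$), suppose $D_{\mathcal O}=0$. Then the sum of the nonnegative terms is $0$. Any single term is bounded above by the whole sum, so each term is $0$. An absolute value is zero only when its argument is zero, which gives equality of the two distributions at every $o$.

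\textbf{Consequence.} This is the contrapositive of ($\Rightarrow$). If $p_\theta(o^\star\mid I_1)\neq p_\theta(o^\star\mid I_2)$ for some $o^\star$, then the $o^\star$ summand is strictly positive. All other summands are nonnegative, so
\[
D_{\mathcal O}(I_1,I_2)\ge \tfrac12\left|p_\theta(o^\star\mid I_1)-p_\theta(o^\star\mid I_2)\right|>0 .
\]

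\textbf{Main obstacle.} There is no real technical obstacle; the only point needing care is well-definedness. I would state explicitly that finiteness of $\mathcal O$, together with measurability of $O$, makes $p_\theta(\cdot\mid I)$ a genuine probability vector. This guarantees that the sum is finite and that exchanging "sum equals zero" with "each term equals zero" is legitimate.
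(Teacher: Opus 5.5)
Your proof is correct and follows the paper's argument essentially step for step. Nonnegativity comes from the absolute-value summands, a vanishing sum of nonnegative terms forces each term to vanish (and conversely), and the final claim follows from one strictly positive summand. Your remarks on well-definedness and the identification with total variation distance are harmless additions that the paper omits.
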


\begin{proof}[\textbf{Proof 3}]
    For every $o\in\mathcal O$,
    \[
        \left|
        p_\theta(o\mid I_1)-p_\theta(o\mid I_2)
        \right|\ge 0 .
    \]
    Hence
    \[
        D_{\mathcal O}(I_1,I_2)\ge 0 .
    \]

    If
    \[
        D_{\mathcal O}(I_1,I_2)=0,
    \]
    then
    \[
        \sum_{o\in\mathcal O}
        \left|
        p_\theta(o\mid I_1)-p_\theta(o\mid I_2)
        \right|
        =0 .
    \]
    Each term is nonnegative, so each term is zero:
    \[
        \left|
        p_\theta(o\mid I_1)-p_\theta(o\mid I_2)
        \right|=0,
        \qquad \forall o\in\mathcal O .
    \]
    Thus
    \[
        p_\theta(o\mid I_1)=p_\theta(o\mid I_2),
        \qquad \forall o\in\mathcal O .
    \]

    Conversely, if
    \[
        p_\theta(o\mid I_1)=p_\theta(o\mid I_2),
        \qquad \forall o\in\mathcal O,
    \]
    then every absolute value term is zero, so
    \[
        D_{\mathcal O}(I_1,I_2)=0 .
    \]

    Finally, if there exists $o^\star\in\mathcal O$ such that
    \[
        p_\theta(o^\star\mid I_1)\ne p_\theta(o^\star\mid I_2),
    \]
    then
    \[
        \left|
        p_\theta(o^\star\mid I_1)-p_\theta(o^\star\mid I_2)
        \right|>0 .
    \]
    Thus the sum contains a positive term, and
    \[
        D_{\mathcal O}(I_1,I_2)>0 .
    \]
\end{proof}

\FloatBarrier
\subsection{Protocols}
\label{app:appendix-diagnostic-protocols}
\paragraph{Sampling and Alignment.}
The sampling audit operationalizes evidence inclusion by endpoint coverage and
path recall. On 100 DBLP hop-QA examples, degree-only sampling gives endpoint
coverage/path recall $0.00/0.00$, query-aware 1-hop gives $1.00/0.58$, and
query-aware 2-hop gives $1.00/0.835$. The alignment objective is defined in
Section~\ref{sec:training-objective-diagnostics}; the formal protocol uses the
validation-selected $\lambda=0.25$ reported in Table~\ref{tab:run-control-summary}.

\paragraph{Ablation Protocol.}
Table~\ref{tab:two-domain-ablation} reports the main-text endpoint-regime
ablations. PubMed tests the shuffle-robust interface-coordinate hierarchy. DBLP
tests the harmful-shuffle role/query readability stress case.

\paragraph{Adjacency Probes.}
Zero-shot adjacency probes test $A_{\mathrm{adj}}$ independently of hop-answer
decoding. On DBLP reachability, \method gives AUROC
$(0.950,0.865,0.861,0.857,0.873)$ for $k=1,\dots,5$, while Pure GNN gives
$(0.818,0.952,0.802,0.722,0.720)$. The six-direction cross-domain transfer
results are reported once in Table~\ref{tab:zeroshot-transfer-main}.

\end{document}